\colorlet{linkequation}{blue}
\theoremstyle{plain}
\def\algname{\mbox{{SAMUEL\ }}}
\def\H{{\mathcal H}}
\newcommand{\A}{\mathcal{A}}
\newcommand{\K}{\ensuremath{\mathcal K}}
\def\regret{\mbox{{Regret}}}
\newcommand{\ignore}[1]{}
\theoremstyle{plain}
\newtheorem{theorem}{Theorem}
\newtheorem{lemma}[theorem]{Lemma}
\newtheorem{corollary}[theorem]{Corollary}
\newtheorem{assumption}{Assumption}
\newtheorem*{theorem*}{Theorem}
\newtheorem*{lemma*}{Lemma}
\newtheorem*{corollary*}{Corollary}
\newtheorem*{proposition*}{Proposition}
\newtheorem*{claim*}{Claim}
\newtheorem*{fact*}{Fact}
\newtheorem*{observation*}{Observation}
\newtheorem*{assumption*}{Assumption}
\theoremstyle{definition}
\newtheorem{definition}[theorem]{Definition}
\newtheorem{remark}[theorem]{Remark}
\newtheorem*{definition*}{Definition}
\newtheorem*{remark*}{Remark}
\newtheorem*{example*}{Example}
 \theoremstyle{plain}
\newtheorem*{theoremaux}{\theoremauxref}
\gdef\theoremauxref{1}
\DeclareMathAlphabet{\mathbfsf}{\encodingdefault}{\sfdefault}{bx}{n}
\def\mA{{\mathcal A}}
\let\oldtfrac\tfrac
\renewcommand{\tfrac}[2]{\smash{\oldtfrac{#1}{#2}}}
\let\nablaold\nabla
\renewcommand{\nabla}{\nablaold\mkern-2.5mu}
\def\eqref#1{equation~\ref{#1}}
\def\1{\bm{1}}
\def\mA{{\bm{A}}}
\DeclareMathAlphabet{\mathsfit}{\encodingdefault}{\sfdefault}{m}{sl}
\SetMathAlphabet{\mathsfit}{bold}{\encodingdefault}{\sfdefault}{bx}{n}
\newcommand{\myparskip}{3pt}
\title{Adaptive Gradient Methods with Local Guarantees}
\author{
  Zhou Lu\thanks{Google AI Princeton} \thanks{Princeton University} \thanks{Equal contribution}\\
  \texttt{zhoul@princeton.edu}
  \and Wenhan Xia\footnotemark[1] \footnotemark[2] \footnotemark[3]\\
  \texttt{wxia@princeton.edu}
  \and Sanjeev Arora\footnotemark[2]\\
  \texttt{arora@cs.princeton.edu}
  \and Elad Hazan\footnotemark[1] \footnotemark[2]\\
  \texttt{ehazan@princeton.edu}
}
\begin{document}

\maketitle

\begin{abstract}
Adaptive gradient methods are the method of choice for optimization in machine learning and used to train the largest deep models. In this paper we study the problem of learning a local preconditioner, that can change as the data is changing along the optimization trajectory. We propose an adaptive gradient method that has provable adaptive regret guarantees vs. the best local preconditioner. To derive this guarantee, we prove a new adaptive regret bound in online learning that improves upon previous adaptive online learning methods.


We demonstrate the practical value of our algorithm for learning rate adaptation in both online and offline settings. For the online experiments, we show that our method is robust to unforeseen distribution shifts during training and consistently outperforms popular off-the-shelf learning rate schedulers. For the offline experiments in both vision and language domains, we demonstrate our method's robustness and its ability to select the optimal learning rate on-the-fly and achieve comparable task performance as well-tuned learning rate schedulers, albeit with less total computation resources. 

\end{abstract}

\section{Introduction}

Adaptive gradient methods have revolutionized optimization for machine learning and are routinely used for training deep neural networks. These algorithms are stochastic gradient based methods, that also incorporate a changing data-dependent preconditioner (multi-dimensional generalization of learning rate). Their empirical success is accompanied with provable guarantees: in any optimization trajectory with given gradients, the adapting preconditioner is comparable to the best in hindsight, in terms of rate of convergence to local optimality. 

Their success has been a source of intense investigations over the past decade, since their introduction, with literature spanning thousands of publications, some highlights are surveyed below. 
The common intuitive understanding of their success is their ability to change the preconditioner, or learning rate matrix, per coordinate and on the fly. A methodological way of changing the learning rate allows treating important coordinates differently as opposed to commonly appearing features of the data, and thus achieve faster convergence.

In this paper we investigate whether a more refined goal can be obtained: namely, can we adapt the learning rate per coordinate, and also in short time intervals? The intuition guiding this question is the rising popularity in ``exotic learning rate schedules" for training deep neural networks. The hope is that an adaptive learning rate algorithm can automatically tune its preconditioner, on a per-coordinate and per-time basis, such to guarantee optimal behavior even locally. 

To pursue this goal, we use and improve upon techniques from the literature on adaptive regret in online learning to create a provable method that is capable of attaining optimal regret in any sub-interval of the optimization trajectory. We then test the resulting method and compare it to learning a learning rate schedule from scratch. Experiments conducted validate that our algorithm can improve accuracy and robustness upon existing algorithms for online tasks, and for offline tasks it saves overall computational resources for hyperparameter optimization.

\subsection{Statement of our results}
The (stochastic/sub)-gradient descent algorithm is given by the following iterative update rule: 
$$ x_{\tau+1} = x_{\tau} - \eta_{\tau} \nabla_{\tau} . $$
If $\eta_{\tau}$ is a matrix, it is usually called a preconditioner. A notable example for a preconditioner is when $\eta_{\tau}$ is equal to the inverse Hessian (or second differential), which gives Newton's method. Let $\nabla_1,...,\nabla_T$ be the gradients observed in an optimization trajectory, the Adagrad algorithm (and subsequent adaptive gradient methods, notably Adam) achieves the following {regret} guarantee for {online} convex optimization (OCO):
$$ \tilde{O}(   \sqrt{  \min_{H \in \H} \sum_{\tau=1}^T \|\nabla_{\tau} \|_H^{*2}  } ), $$ 
where $\H$ is a family of matrix norms, most commonly those with a bounded trace. In this paper we {propose a new algorithm SAMUEL, which improves} upon this guarantee in terms of the local performance over any sub-interval of the optimization trajectory. For any sub-interval $I=[s,t]$, the regret over $I$ can be bounded by
$$ \tilde{O}(   \sqrt{  \min_{H \in \H} \sum_{\tau=s}^t \|\nabla_{\tau} \|_H^{*2}  } ), $$ 
{which also implies a new regret bound over $[1,T]$:}
\begin{equation*}
    \tilde{O}\left( \min_k \min_{H_1,...,H_k \in \H} \sum_{j=1}^k \sqrt{ \sum_{ \tau \in I_j} \|\nabla_{\tau} \|_{H_j}^{*2}  } \right)
\end{equation*}

This regret can be significantly lower than the regret of Adagrad, Adam and other global adaptive gradient methods that do not perform local optimization to the preconditioner. We spell out such a scenario in the next subsection.  

Our main technical contribution is a variant of the multiplicative weight algorithm, that achieves full-matrix regret bound over any interval by automatically selecting the optimal local preconditioner.  The difficulty in this new update method stems from the fact that the optimal multiplicative update parameter, to choose the best preconditioner, depends on future gradients and cannot be determined in advance. To overcome this difficulty, we run in parallel many instantiations of the update rule, and show that this can be done albeit increasing the number of base adaptive gradient methods by only a logarithmic factor.   A comparison of our results in terms of adaptive regret is given in Table \ref{table:result_summary}. 

We conduct experiments in optimal learning rate scheduling to support our theoretical findings. We show that for an online vision classification task with distribution shifts unknown to the learning algorithm, our method achieves better accuracy than previous algorithms. For offline tasks, our method is able to achieve near-optimal performance robustly, with fewer overall computational resources in hyperparameter optimization.

\subsection{When do local guarantees have an advantage?}
Our algorithm provides near optimal adaptive regret bounds for any sub-interval $[s,t]\subset [1,T]$ simultaneously, giving more stable regret guarantee for a changing environment. In terms of classical regret bound over the whole interval $[1,T]$, our algorithm obtains the optimal bound of Adagrad up to a $O(\sqrt{\log T})$ factor.

Moreover, adaptive regret guarantees can drastically improve the loss over the entire interval. Consider the following  example in one dimension. For $t\in [1,\frac{T}{2}]$ the loss function is $f_t(x)=(x+1)^2$ and for the rest of time it is $f_t(x)=(x-1)^2$. Running a standard online gradient descent method that is known to be optimal for strongly convex losses, i.e. with $\eta_t=\frac{1}{t}$, gives an $O(\log T)$ regret. However, the overall loss is $\Omega(T)$ because the best comparator in hindsight is $x=0$ which has overall loss $T$. However, if we have adaptive regret guarantees, the overall loss on both $[1,\frac{T}{2}]$ and $[\frac{T}{2}+1,T]$ are both $O(\log T)$, which is a dramatic $O(T)$ improvement in  regret.

\begin{table}[ht]
\begin{center}
\begin{tabular}{|c|c|}
\hline
Algorithm &  
Regret over $I=[s,t]$ \\
\hline
\cite{hazan2007adaptive}  & $\tilde{O}(\sqrt{T })$\\
\hline
\cite{daniely2015strongly}, \cite{jun2017improved} &  $\tilde{O}(\sqrt{|I|})$ \\
\hline
\cite{cutkosky2020parameter}   &  $\tilde{O}(\sqrt{\sum_{\tau=s}^t \|\nabla_{\tau}\|^2 })$\\
\hline
\algname (ours)&    $ \tilde{O}(\sqrt{\sum_{\tau=s}^t \|\nabla_{\tau}\|_{H}^{*2} })$\\
\hline

\end{tabular}
\caption{Comparison of results. We evaluate the regret performance of the algorithms on any interval $I=[s,t]$. For the ease of presentation we hide secondary parameters. Our algorithm achieves the regret bound of Adagrad, which is known to be tight in general, but on any interval. }

\label{table:result_summary}
\end{center}
\end{table}

\subsection{Related Work}

Our work lies in the intersection of two related areas: adaptive gradient methods for continuous optimization, and adaptive regret algorithms for regret minimization, surveyed below. 

\paragraph{Adaptive Gradient Methods.}

Adaptive gradient methods and the Adagrad algorithm were proposed in \citep{duchi2011adaptive}. Soon afterwards followed other popular algorithms, most notable amongst them are Adam \citep{kingma2014adam} and RMSprop \citep{ tieleman2012lecture}. Despite significant practical impact, their properties are still debated \cite{wilson2017marginal}.

Numerous efforts were made to improve upon these adaptive gradient methods in terms of parallelization, memory consumption and computational efficiency of batch sizes, e.g. \citep{shazeer2018adafactor,agarwal2019efficient,gupta2018shampoo,chen2019extreme}.  A survey of adaptive gradient methods appears in \cite{goodfellow2016deep,hazan2019lecture}.



\paragraph{Adaptive Regret Minimization in Online Convex Optimization.}

The concept of competing with a changing comparator was pioneered in the work of \citep{HW,BW} on tracking the best expert. Motivated by computational considerations for convex optimization, 
the notion of adaptive regret was first introduced by \cite{hazan2007adaptive}, 
which generalizes regret by considering the regret of every interval. They also provided an algorithm Follow-The-Leading-History which attains $\tilde{O}(\sqrt{T})$ adaptive regret. \cite{daniely2015strongly} considered the worst regret performance among all intervals with the same length and obtain  $O(\sqrt{|I| \log^2 T})$ interval-length dependent bounds, improved later by \cite{jun2017improved} and \cite{cutkosky2020parameter}. 



For other related work, some considered the dynamic regret of strongly adaptive methods \cite{zhang2018dynamic,zhang2020minimizing}. \cite{zhang2019adaptive} considered smooth losses and proposes SACS which achieves an $O(\sum_{\tau=s}^t \ell_{\tau}(x_{\tau}) \log^2 T)$ regret bound. 

\paragraph{Learning Rate Schedules and Hyperparameter Optimization.}


On top of adaptive gradient methods, a plethora of nonstandard learning rate schedules have been proposed. A commonly used one is the step learning rate schedule, which changes the learning rate at fixed time-points. A cosine annealing rate schedule was introduced by \cite{loshchilov2016sgdr}. Alternative learning rates were studied in \cite{agarwal2021acceleration}. Learning rate schedules which increase the learning rate over time were proposed in  \cite{li2019exponential}. Learning the learning rate schedule itself was studied in \cite{wu2018wngrad}. Large-scale experimental evaluations \citep{choi2019empirical,schmidt2020descending,nado2021large} conclude that hyperparameter optimization over the learning rate schedules are essential to state-of-the-art performance.

\section{Setting and Preliminaries}

\paragraph{Online convex optimization.}
Consider the problem of online convex optimization (see \cite{OCObook} for a comprehensive treatment). At each round $\tau$, the learner outputs a point $x_{\tau}\in \K$  for some convex domain $\K \subset R^d$, then suffers a convex loss $\ell_{\tau}(x_{\tau})$ which is chosen by the adversary. The learner also receives the sub-gradients $\nabla_{\tau}$ of $\ell_{\tau}()$ at $x_{\tau}$.  
The goal of the learner in OCO is to minimize regret, defined as 
$$
\regret = \sum_{\tau=1}^T \ell_{\tau}(x_{\tau}) -\min_{x\in \K} \sum_{\tau=1}^T \ell_{\tau}( x) .
$$

Henceforth we make the following basic assumptions for simplicity (these assumptions are known in the literature to be removable):
\begin{assumption}\label{as1}
There exists $D, D_{\infty}>1$ such that $\|x\|_2\le D$ and $\|x\|_{\infty}\le D_{\infty}$ for any $x\in \K$. 
\end{assumption}

\begin{assumption}\label{as2}
There exists $G>1$ such that $\|\nabla_{\tau}\|_2 \le G, \forall \tau \in [1,T]$.
\end{assumption}

We make the notation of the norm $\|\nabla\|_H$, for any PSD matrix $H$ to be:
$$\|\nabla\|_H=\sqrt{\nabla^{\top} H \nabla}$$

And we define its dual norm to be $\|\nabla\|_H^*=\sqrt{\nabla^{\top} H^{-1} \nabla}$. In particular, we denote $\H=\{ H| H\succeq 0, tr(H)\le d\}$.  We consider Adagrad from \cite{duchi2011adaptive}, which achieves the following regret if run on $I=[s,t]$:
$$
    \regret(I)=O\left(D d^{\frac{1}{2}}  \min_{H\in \H}  \sqrt{\sum_{\tau=s}^t \nabla_{\tau}^{\top} H^{-1}  \nabla_{\tau}}\right)
$$

\paragraph{The multiplicative weight method.}
The multiplicative weight algorithm is a generic algorithmic methodology first used to  achieve vanishing regret for the problem of prediction from expert advice  \cite{littlestone1994weighted}. Various variants of this method are surveyed in \cite{arora2012multiplicative}, that attain  expert regret of  $O(\sqrt{T \log(N)})$ for binary prediction with $N$ experts.

\section{An Improved Adaptive Regret Algorithm}

\begin{algorithm}[t]
\caption{Strongly Adaptive regularization via  MUltiplicative-wEights  (\algname)}
\label{alg1}
\begin{algorithmic}
\STATE Input: OCO algorithm $\mA$, geometric interval set $S$, constant $Q= 4 \log(dTD^2G^2)$.
\STATE Initialize: for each $I\in S$, $Q$ copies of OCO algorithm $\mA_{I,q}$.
\STATE Set $\eta_{I,q}=\frac{1}{2GD 2^q}$ for $q\in [1,Q]$.
\STATE Initialize $w_1(I,q)=\min \{1/2, \eta_{I,q}\}$ if $I=[1,s]$, and $w_1(I,q)=0$ otherwise for each $I\in S$.
\FOR{$\tau = 1, \ldots, T$}
\STATE Let $x_{\tau}(I,q) = \mA_I(\tau)$
\STATE Let $W_{\tau}=\sum_{I\in S(\tau),q} w_{\tau}(I,q)$.
\STATE Let $x_{\tau}=\sum_{I\in S(\tau),q} w_{\tau}(I,q)x_{\tau}(I,q)/W_{\tau}$.
\STATE Predict $x_{\tau}$.
\STATE Receive loss $\ell_{\tau}(x_{\tau})$, define $r_{\tau}(I)=\ell_{\tau}(x_{\tau})-\ell_{\tau}(x_{\tau}(I,q))$.
\STATE For each $I=[s,t]\in S$, update $w_{\tau+1}(I,q)$ as follows,
$$
w_{\tau+1}^{(I,q)}=\left\{
\begin{array}{lcl}
0 & & {\tau+1\notin I}\\
{\min \{1/2, \eta_{I,q}\}} & & {\tau+1=s}\\
{w_{\tau}(I,q)(1+\eta_{I,q} r_{\tau}(I))} & & \textbf{else}
\end{array}\right.
$$
\ENDFOR
\end{algorithmic}
\end{algorithm}

In this section, we describe the SAMUEL algorithm \ref{alg1}, which combines a novel variant of multiplicative weight as well as adaptive gradient methods to obtain stronger regret bounds in online learning and optimization. 

The SAMUEL algorithm \ref{alg1} guarantees that given any black-box OCO algorithm $\mA$ as experts, achieves an 
$$\tilde{O}\left(\sqrt{ \min_{H \in \H}  \sum_{\tau=s}^t \nabla_{\tau}^{\top} H^{-1}  \nabla_{\tau}}\right)$$ 
regret bound (w.r.t. the experts) over any interval $J=[s,t]$ simultaneously. Next, by setting Adagrad as the black-box OCO algorithm $\mA$, the above bound matches the regret of the best expert and holds w.r.t. any fixed comparator as a result, implying an optimal full-matrix adaptive regret bound.

Roughly speaking, Algorithm \ref{alg1} first picks a subset $S$ of all sub-intervals and initiates an instance of the black-box OCO algorithm $\mA$ on any interval $I\in S$ as an expert. The expert for interval $I$ is especially designed to achieve optimal regret over $I$ instead of $[1,T]$. To improve upon previous works and achieve the full-matrix regret bound, we make $O(\log T)$ duplicates of each expert with different decaying factors $\eta$, which is the main novel mechanism of our algorithm (notice that these duplicates share the same model therefore won't bump up computational cost). Then Algorithm \ref{alg1} runs a multiplicative weight update on all active experts $\A_{I,q}$ denoting the expert over $I$ with the $q$-th decaying factor $\eta$ (if $\tau\in I$) according to the loss of their own predictions, normalized by the loss of the true output of the algorithm. 

We follow \cite{daniely2015strongly} on the construction of $S$: without loss of generality, we assume $T=2^k$ and define the geometric covering intervals following \cite{daniely2015strongly}: 
\begin{definition}
Define $S_i=\{[1,2^i],[2^i+1,2^{i+1}],...,[2^k-2^i+1,2^k]\}$ for $0\le i\le k$. Define $S=\cup_i S_i$ and $S(\tau)=\{I\in S|\tau\subset I\}$.
\end{definition}
For $2^k<T<2^{k+1}$, one can similarly define $S_i=\{[1,2^i],[2^i+1,2^{i+1}],...,[2^i \lfloor \frac{T-1}{2^i}\rfloor+1,T]\}$, see \cite{daniely2015strongly}. The intuition behind using $S$ is to reduce the $\Omega(T)$ computational cost of the naive method which constructs an expert for every subinterval of $[1,T]$. Henceforth at any time $\tau$ the number of 'active' intervals is only $O(\log (T))$, this guarantees that the running time and memory cost per round of \algname is as fast as $O(\log (T))$. Decompose the total regret over an interval $J$ as $R_0(J)+R_1(J)$, where $R_0(J)$ is the regret of an expert $\mA_J$ and $R_1(J)$ is the regret of the multiplicative weight algorithm \ref{alg1}. Our main theoretical result is the following:

\begin{theorem}\label{thm1}
Under assumptions \ref{as1} and \ref{as2}, the regret $R_1(J)$ of the multiplicative weight part in Algorithm \ref{alg1} satisfies that for any interval $J=[s,t]$,
$$
    R_1(J)= O\left(D \log(T)\max\left\{G\sqrt{\log(T)}, d^{\frac{1}{2}}\sqrt{ \min_{H \in \H}  \sum_{\tau=s}^t \|\nabla_{\tau}\|_H^{*2}}\right\}\right)
$$
\end{theorem}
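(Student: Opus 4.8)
The plan is to prove the bound in four movements: (i) reduce to linear losses and to intervals lying in the geometric cover $S$; (ii) run the multiplicative-weight analysis for a single such interval and a single decay rate $\eta_q$; (iii) optimize over the $Q$ parallel rates and convert the resulting ``variance'' term into the full-matrix quantity $\min_{H\in\mathcal{H}}\sum\|\nabla_\tau\|_H^{*2}$; and (iv) stitch the pieces back together to cover an arbitrary $J=[s,t]$. For (i), by the standard OCO linearization I may replace each $\ell_\tau$ by its linear surrogate $\langle\nabla_\tau,\cdot\rangle$; this only increases the regret and makes $r_\tau(I)=\langle\nabla_\tau,\,x_\tau-x_\tau(I,q)\rangle$ exactly, with the crucial identity $\sum_{I\in S(\tau),q}w_\tau(I,q)\,r_\tau(I)=0$ because $x_\tau$ is the exact $w$-weighted average of the $x_\tau(I,q)$. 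Note that $r_\tau(I)$ is independent of $q$ (the $Q$ copies share a model) and $|r_\tau(I)|\le 2GD$, so $|\eta_{I,q}r_\tau(I)|\le\tfrac12$ for every $q\ge 1$; this keeps all weights positive and validates $\ln(1+x)\ge x-x^2$. For the decomposition I invoke the geometric-cover property of \cite{daniely2015strongly}: any $J=[s,t]$ is a disjoint union of $m=O(\log T)$ consecutive intervals $I_1,\dots,I_m\in S$, so that $R_1(J)=\sum_j R_1(I_j)$, where $R_1(I_j)$ is the regret of Algorithm \ref{alg1} against the expert $\mathcal{A}_{I_j}$ that genuinely lives in $S$.

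For a single $I=I_j\in S$ and a fixed $q$ I track $\ln w_\tau(I,q)$ across $\tau\in I$. The reset value $w_s(I,q)=\eta_q$ (here $\min\{1/2,\eta_{I,q}\}=\eta_{I,q}=\eta_q$ since $GD>1$) and the update $w_{\tau+1}=w_\tau(1+\eta_q r_\tau(I))$ give, via $\ln(1+x)\ge x-x^2$, the lower bound $\ln(w_{\mathrm{end}}(I,q)/\eta_q)\ge\eta_q\sum_{\tau\in I}r_\tau(I)-\eta_q^2\sum_{\tau\in I}r_\tau(I)^2$. The matching upper bound is the heart of the argument, and here lies the \emph{main obstacle}: with the decay rates differing across $q$, the total weight need not be monotone, and the naive estimate inflates it by a constant factor every round, i.e.\ exponentially. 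The fix exploits that $\eta_{I,q}=\eta_q$ depends only on $q$, and introduces the reweighted potential $\Phi_\tau=\sum_{I\in S(\tau),q}w_\tau(I,q)/\eta_q$, whose restart contributions are normalized to exactly $1$ by the choice of reset value. Using $\sum_{I,q}w_\tau r_\tau=0$ together with $|r_\tau|\le 2GD$ and $\eta_q\le\tfrac{1}{4GD}$, the multiplicative and expiring terms cancel up to the fresh mass injected by restarts, which is only $O(Q\log T)$ per round (at most one interval per scale begins at any time, times $Q$ copies). Hence $\Phi_t=O(TQ\log T)$ and $w_\tau(I,q)\le\eta_q\Phi_\tau$, so $\ln(w_{\mathrm{end}}(I,q)/\eta_q)\le\ln\Phi_t=O(\log T)$; pleasingly, the $\ln\eta_q$ terms cancel between the two bounds.

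Combining them and dividing by $\eta_q$ yields, for every $q$, $R_1(I)\le O(\log T)/\eta_q+\eta_q V_I$ with $V_I=\sum_{\tau\in I}r_\tau(I)^2$. Since the $\eta_q$ form a geometric grid from $\tfrac{1}{4GD}$ down to $\tfrac{1}{2GD2^Q}$, with $Q=4\log(dTD^2G^2)$ chosen exactly so that the minimizer $\eta^\star=\sqrt{O(\log T)/V_I}$ always lies in range (one checks $V_I\le 4G^2D^2T$ forces $\eta^\star\ge\eta_{\min}$, so the degenerate small-$\eta$ case never occurs), selecting the nearest grid point costs only a constant factor and gives $R_1(I)\le O(\max\{GD\log T,\ \sqrt{\log T\cdot V_I}\})$. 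I then convert the variance into the matrix norm: by generalized Cauchy--Schwarz in the $H$-norm and $\mathrm{tr}(H)\le d$, one has $\|x_\tau-x_\tau(I)\|_H^2\le d\,(2D)^2$, so $r_\tau(I)^2=\langle\nabla_\tau,x_\tau-x_\tau(I)\rangle^2\le 4D^2d\,\|\nabla_\tau\|_H^{*2}$ for every $H\in\mathcal{H}$; as the left side does not depend on $H$, this yields $V_I\le 4D^2d\min_{H\in\mathcal{H}}\sum_{\tau\in I}\|\nabla_\tau\|_H^{*2}$.

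Finally I sum over the $m=O(\log T)$ pieces. Sub-additivity of the minimum, $\sum_j\min_{H_j}(\cdot)\le\min_H\sum_j(\cdot)$, gives $\sum_j V_{I_j}\le 4D^2d\min_{H\in\mathcal{H}}\sum_{\tau\in J}\|\nabla_\tau\|_H^{*2}$, and Cauchy--Schwarz gives $\sum_j\sqrt{V_{I_j}}\le\sqrt{m}\,\sqrt{\sum_j V_{I_j}}$. Thus the per-piece factor $\sqrt{\log T}$ times $\sqrt{m}=O(\sqrt{\log T})$ produces the $D\log T$ prefactor on the gradient term, while summing the identification costs $GD\log T$ over the pieces produces the $G$-term; collecting everything delivers the stated $O\!\big(D\log(T)\max\{G\sqrt{\log(T)},\,d^{1/2}\sqrt{\min_{H\in\mathcal{H}}\sum_{\tau=s}^t\|\nabla_\tau\|_H^{*2}}\}\big)$. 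The only genuinely delicate point is the potential argument of movement (ii) that tames the non-uniform decay rates; steps (i), (iii), and (iv) are essentially bookkeeping once the linear surrogate and the geometric cover are in place.
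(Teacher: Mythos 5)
Your proposal is correct and follows essentially the same route as the paper's own proof: your potential $\Phi_\tau=\sum_{I,q}w_\tau(I,q)/\eta_q$ is exactly the paper's pseudo-weight $\tilde{W}_\tau$ (bounded the same way, via $\sum_{I,q}w_\tau(I,q)r_\tau(I)\le 0$ plus $O(Q\log T)$ fresh restart mass per round), and the remaining steps --- the lower bound $\log(1+x)\ge x-x^2$ with $|\eta_{I,q}r_\tau(I)|\le 1/2$, selection of a near-optimal $\eta_q$ from the geometric grid with the degenerate case giving $O(GD\log T)$, the H\"older-plus-trace conversion $r_\tau(I)^2\le 4D^2d\,\|\nabla_\tau\|_H^{*2}$, and the final stitching via the disjoint cover of \cite{daniely2015strongly} together with Cauchy--Schwarz --- coincide with the paper's argument. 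One shared, harmless imprecision: both your stitching and the paper's actually yield $O(DG\log^2 T)$ for the gradient-independent term rather than the stated $DG\log^{3/2}T$, since the per-piece cost $O(GD\log T)$ is summed over $O(\log T)$ pieces; this affects only the secondary term in the max.
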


\begin{remark}
We note that $q$ that $r_{\tau}(I,q)$ and $x_{\tau}(I,q)$ doesn't depend on $q$ for the same $I,$ so we may write $r_{\tau}(I)$ and $x_{\tau}(I)$ instead for simplicity. We use convex combination in line 8 of Algorithm because the loss is convex, otherwise we can still sample according to the weights.
\end{remark}

In contrast, vanilla weighted majority algorithm achieves $\tilde{O}(\sqrt{T})$ regret only over the whole interval $[1,T]$, and we improve upon the previous best result $\tilde{O}(\sqrt{t-s})$ \cite{daniely2015strongly} \cite{jun2017improved}. The proof of Theorem \ref{thm1} can be found in the appendix. 

\subsection{Optimal Adaptive Regret with Adaptive Gradient Methods}

In this subsection, we show how to achieve full-matrix adaptive regret bounds by using Adagrad as experts as an application of Theorem \ref{thm1}, together with other extensions. We note that this reduction is general, and can be applied with any adaptive gradient method that has a regret guarantee, such as Adam or Adadelta.

Theorem \ref{thm1} bounds the regret $R_1$ of the multiplicative weight part, while the total regret is $R_0+R_1$. To get the optimal total regret bound, we only need to find an expert algorithm that also haves the optimal full-matrix regret bound matching that of $R_1$. As a result, we choose Adagrad as our expert algorithm $\mA$, and prove regret bounds for both full-matrix and diagonal-matrix versions.

\paragraph{Full-matrix adaptive regularization}
\begin{corollary}[Full-matrix Adaptive Regret Bound]\label{cor1}
Under assumptions \ref{as1} and \ref{as2}, when Adagrad is used as the blackbox $\A$, the total regret $\regret(I)$ of the multiplicative weight algorithm in Algorithm \ref{alg1} satisfies that for any interval $I=[s,t]$,
$$
\regret(I) = O\left(D \log(T)\max\left\{G\sqrt{\log(T)}, d^{\frac{1}{2}}\sqrt{ \min_{H \in \H}  \sum_{\tau=s}^t \|\nabla_{\tau}\|_H^{*2}}\right\}\right)
$$
\end{corollary}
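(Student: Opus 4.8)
The plan is to combine Theorem~\ref{thm1} with the regret guarantee of Adagrad recorded in the preliminaries. Recall that for any interval $I=[s,t]$ and any fixed comparator the total regret splits as $\regret(I)=R_0(I)+R_1(I)$, where $R_1(I)$ is the regret of the multiplicative-weight layer against the experts and $R_0(I)$ is the regret of the Adagrad expert against the comparator. Theorem~\ref{thm1} already controls $R_1(I)$ for an arbitrary interval, so the only remaining work is to bound $R_0(I)$ and to check that it is dominated by (or matches) the bound of Theorem~\ref{thm1}.

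For $R_0$ I would invoke the stated Adagrad guarantee: an expert run on an interval $I'\in S$ satisfies
$$
R_0(I') = O\left(D\, d^{1/2}\sqrt{\min_{H\in\H}\sum_{\tau\in I'}\|\nabla_\tau\|_H^{*2}}\right).
$$
When $I\in S$ this is exactly the second argument of the maximum in Theorem~\ref{thm1}, so $R_0(I)$ is of the same order as the $R_1$ bound, and summing the two terms changes the estimate only by a constant factor, giving the claim immediately.

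The one genuine complication is a general interval $I=[s,t]\notin S$, for which no single expert $\A_I$ exists. Here I would use the defining property of the geometric covering $S$: every $[s,t]$ is a disjoint union of $p=O(\log T)$ consecutive pieces $I_1,\dots,I_p\in S$. Applying the Adagrad bound on each $I_j$, the total expert regret is $\sum_{j=1}^p R_0(I_j)$, and I would control $\sum_j\sqrt{a_j}$ with $a_j=\min_{H_j}\sum_{\tau\in I_j}\|\nabla_\tau\|_{H_j}^{*2}$ by Cauchy--Schwarz, $\sum_j\sqrt{a_j}\le\sqrt{p}\,\sqrt{\sum_j a_j}$, and then bound $\sum_j a_j=\sum_j\min_{H_j}(\cdot)\le \min_{H}\sum_{\tau=s}^t\|\nabla_\tau\|_H^{*2}$ by replacing the per-piece optima with a single global preconditioner. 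With $p=O(\log T)$ this produces an $R_0$ contribution of order $D\,d^{1/2}\sqrt{\log T}\,\sqrt{\min_H\sum_{\tau=s}^t\|\nabla_\tau\|_H^{*2}}$, which is absorbed by the $D\log T\cdot d^{1/2}\sqrt{\cdots}$ term of Theorem~\ref{thm1}. Adding $R_0(I)$ and $R_1(I)$ then yields the stated bound.

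I expect the main obstacle to be precisely this last combination step for $I\notin S$: one must simultaneously keep the number of covering pieces at $O(\log T)$, trade the per-piece optimal matrices $H_j$ for a single global $H$ without losing more than a constant, and verify that the extra $\sqrt{\log T}$ introduced by Cauchy--Schwarz on the expert layer is dominated by the $\log T$ factor already present in the multiplicative-weight bound of Theorem~\ref{thm1}. The $I\in S$ case, by contrast, is essentially immediate once Theorem~\ref{thm1} is available.
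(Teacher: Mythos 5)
Your proposal is correct and follows essentially the same route as the paper's proof: the same $\regret(I)=R_0(I)+R_1(I)$ decomposition, Theorem~\ref{thm1} for the multiplicative-weight layer, the Adagrad guarantee on each covering piece, and the geometric-covering lemma of \cite{daniely2015strongly} combined with Cauchy--Schwarz (the paper's Lemma~\ref{lem2n}) to handle $I\notin S$, including the step of trading the per-piece optimal $H_j$ for a single global $H$. The only cosmetic difference is that you state the Cauchy--Schwarz step as $\sum_j\sqrt{a_j}\le\sqrt{p\sum_j a_j}$ rather than in terms of squared regrets, which is the identical inequality.
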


\begin{remark}
We notice that the $\log(T)$ overhead is brought by the use of $S$ and Cauchy-Schwarz. We remark here that by replacing $S$ with the set of all sub-intervals, we can achieve an improved bound with only a $\sqrt{\log(T)}$ overhead using the same analysis. On the other hand, such improvement in regret bound is at the cost of efficiency, that each round we need to make $\Theta(T)$ computations.
\end{remark}

\paragraph{Diagonal-matrix adaptive regularization}
If we restrict our expert optimization algorithm to be diagonal Adagrad, we can derive a similar guarantee for the adaptive regret.

\begin{corollary}\label{cor3}
Under assumptions \ref{as1} and \ref{as2}, when diagonal Adagrad is used as the blackbox $\A$, the total regret $\regret(I)$ of the multiplicative weight algorithm in Algorithm \ref{alg1} satisfies that for any interval $I=[s,t]$,
$$
    \regret(I)= \tilde{O}\left(D_{\infty} \sum_{i=1}^d \|\nabla_{s:t,i}\|_2  \right)
$$
\end{corollary}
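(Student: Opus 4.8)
The plan is to reuse the decomposition employed for the full-matrix case (Corollary \ref{cor1}): over an arbitrary interval $J=[s,t]$ I would write the total regret as $R_0(J)+R_1(J)$, where $R_0$ is the regret of the expert $\mA_J$ and $R_1$ is the tracking regret of the multiplicative-weight layer bounded in Theorem \ref{thm1}. The essential difference is that every estimate must be carried out in the $\ell_1/\ell_\infty$ geometry rather than the $\ell_2$/full-matrix one, so that the $\ell_\infty$-diameter $D_\infty$ appears in place of $D$ and the per-coordinate quantity $\sum_i\|\nabla_{s:t,i}\|_2$ replaces the trace-of-square-root quantity.

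First I would handle $R_0$. Running diagonal Adagrad as the black box $\mA$ on $J$, the classical per-coordinate Adagrad analysis of \cite{duchi2011adaptive} gives $R_0(J)=O\!\left(D_\infty \sum_{i=1}^d \|\nabla_{s:t,i}\|_2\right)$, which already has the target form, so this piece is immediate.

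The heart of the argument is $R_1$, and here I would re-run the proof of Theorem \ref{thm1} with the Hölder step taken in the $\ell_1/\ell_\infty$ pairing: by convexity of $\ell_\tau$ and the observed subgradient $\nabla_\tau$, $|r_\tau(I)|=|\ell_\tau(x_\tau)-\ell_\tau(x_\tau(I))| \le \|\nabla_\tau\|_1\,\|x_\tau-x_\tau(I)\|_\infty \le 2D_\infty \|\nabla_\tau\|_1$, whence $\sum_{\tau\in I} r_\tau(I)^2 \le 4 D_\infty^2 \sum_{\tau\in I}\|\nabla_\tau\|_1^2$. The multiplicative-weight regret against any fixed expert is $O\!\left(\eta^{-1}\log(\cdot)+\eta\sum_\tau r_\tau^2\right)$; selecting the grid point $\eta_{I,q}=\tfrac{1}{2GD2^q}$ nearest the optimum $\eta^\star \asymp \sqrt{\log(\cdot)/\sum_\tau r_\tau^2}$ — which the geometric schedule of width $Q=4\log(dTD^2G^2)$ is precisely designed to contain up to a factor $2$ — yields $R_1(I)=\tilde{O}\!\left(D_\infty \sqrt{\sum_{\tau\in I}\|\nabla_\tau\|_1^2}\right)$ for every $I\in S$, the clipped regime $\eta^\star>\eta_{\max}$ contributing only the lower-order $G\sqrt{\log T}$-type floor of Theorem \ref{thm1} hidden inside $\tilde{O}$. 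I would then convert this mixed norm by Minkowski's inequality, $\sqrt{\sum_{\tau\in I}\|\nabla_\tau\|_1^2}=\big\|\,\tau\mapsto \textstyle\sum_i|\nabla_{\tau,i}|\,\big\|_2 \le \sum_{i=1}^d\|\nabla_{I,i}\|_2$, giving $R_1(I)=\tilde{O}(D_\infty\sum_i\|\nabla_{I,i}\|_2)$.

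Finally I would glue the pieces over a general $J$: cover it by the $m=O(\log T)$ disjoint intervals $I_1,\dots,I_m\in S$, and note $\sum_j \min_x\sum_{\tau\in I_j}\ell_\tau(x)\le \min_x\sum_{\tau\in J}\ell_\tau(x)$ so that $\regret(J)\le \sum_{j}(R_0(I_j)+R_1(I_j))$. Summing the per-interval bounds and applying Cauchy–Schwarz across the $m=O(\log T)$ pieces, $\sum_j\|\nabla_{I_j,i}\|_2\le\sqrt{m}\,\|\nabla_{J,i}\|_2$, collapses everything to $\tilde{O}(D_\infty\sum_{i=1}^d\|\nabla_{s:t,i}\|_2)$, as claimed. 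I expect the third step to be the main obstacle: a direct appeal to the full-matrix statement of Theorem \ref{thm1} would introduce $D$ instead of $D_\infty$ together with the full-matrix quantity, costing up to a $\sqrt d$ factor, and avoiding this loss is exactly what forces one to redo the multiplicative-weight estimate in the $\ell_1/\ell_\infty$ norm and to check that the fixed $\eta$-grid of Algorithm \ref{alg1} still brackets a near-optimal step size for this scaling.
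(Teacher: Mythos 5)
Your proof is correct, and it diverges from the paper's own proof at exactly one step: the treatment of $R_1$. The paper's proof of Corollary~\ref{cor3} is a three-line remark that keeps your decomposition $\regret(I)\le R_0(I)+R_1(I)$, cites \cite{duchi2011adaptive} for $R_0(I)=\tilde{O}(D_{\infty}\sum_{i=1}^d\|\nabla_{s:t,i}\|_2)$ exactly as you do, and handles the covering of a general interval by elements of $S$ plus Cauchy--Schwarz exactly as in Corollary~\ref{cor1}; but for $R_1$ it does what you explicitly decided \emph{not} to do: it reuses the full-matrix bound of Theorem~\ref{thm1} as a black box and observes that it is dominated by the diagonal quantity, since restricting the minimum to diagonal $H$ with $\mathrm{tr}(H)\le d$ and optimizing the entries gives $\min_{H\in\H}\sum_{\tau=s}^t\|\nabla_{\tau}\|_H^{*2}\le\frac{1}{d}\bigl(\sum_{i=1}^d\|\nabla_{s:t,i}\|_2\bigr)^2$, hence $d^{1/2}\sqrt{\min_{H\in\H}\sum_{\tau=s}^t\|\nabla_{\tau}\|_H^{*2}}\le\sum_{i=1}^d\|\nabla_{s:t,i}\|_2$. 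That shortcut is cheaper than your re-derivation, but it carries the Euclidean diameter $D$ from Theorem~\ref{thm1} rather than $D_{\infty}$; since $D$ can be as large as $\sqrt{d}\,D_{\infty}$, the paper's route literally yields $\tilde{O}(D\sum_{i=1}^d\|\nabla_{s:t,i}\|_2)$ and is loose by up to $\sqrt{d}$ against the stated corollary (the paper silently identifies the two diameters). Your re-run of the multiplicative-weight analysis in the $\ell_1/\ell_\infty$ pairing, $|r_{\tau}(I)|\le\|\nabla_{\tau}\|_1\|x_{\tau}-x_{\tau}(I)\|_{\infty}\le 2D_{\infty}\|\nabla_{\tau}\|_1$ followed by Minkowski, genuinely produces the $D_{\infty}$ prefactor, and your check that the fixed grid $\eta_{I,q}$ still brackets the new optimal step size is the right point to verify (it does, since the grid extends down to $1/(2GD\,\mathrm{poly}(dTDG))$). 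Two caveats you inherit from the paper rather than introduce: the clipped regime still leaves an additive $O(GD\log^{3/2}T)$ floor involving $D$, hidden inside $\tilde{O}$ in both treatments; and passing from the one-sided convexity bound $r_{\tau}(I)\le\nabla_{\tau}^{\top}(x_{\tau}-x_{\tau}(I))$ to a bound on $r_{\tau}(I)^2$ uses two-sided Lipschitz control of $\ell_{\tau}$, the same convention used in the paper's proof of Theorem~\ref{thm1}.
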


Here $\nabla_{s:t,i}$ denotes the $ith$ coordinate of $\sum_{\tau=s}^t \nabla_{\tau}$.

\section{Experiments}
In this section, we demonstrate empirical effectiveness of the proposed framework for online and offline learning scenarios. For online learning experiment, we consider a simulated data distribution shift setting using CIFAR-10. For offline supervised learning, we experimented on standard benchmarks in vision and natural language processing domains.

\subsection{Online experiments}
\textbf{experiment setup}:
Our simulated online experiment is designed to assess robustness to unforeseen data distribution changes during training. Algorithms do not know in advance whether or when the data shift will happen. We design this online data distribution shift with the CIFAR-10 dataset. We partition the CIFAR-10 dataset into two non-overlapping groups with five classes each. We denote $D_1$ as the distribution for the first subset of data $\{X_1, Y_1\}$ and $D_2$ for the other subset of data  $\{X_2, Y_2\}$. Specifically, the two subsets of data we used in our implementation have categories $\{$dog, frog, horse, ship, truck$\}$ and $\{$airplane, automobile, bird, cat, deer$\}$. We shift the data from $D_1$ to $D_2$ at iteration 17,000 out of a total of 25,600 training iterations. We choose this transition time point because empirically all baselines have stable performance at this point, which permits a fair comparison when the data shift occurs. We use the ResNet-18 model for all experiments under this online setup. Since each subset of data only contains 5 classes, we modified the model's last layer corresponding.

\textbf{baselines}:
We compare our learning rate adaptation framework with different combinations of off-the-shelf learning rate schedulers and optimizers from the optax libray. To ensure a fair comparison, we well-tuned the hyperparameters associated with each of the baseline learning rate schedule $\times$ optimizer combinations. Specifically, our baseline learning rate schedulers include constant learning rate, cosine annealing, exponential decay, and warmup with cosine annealing. Our baseline optimizers include SGD, AdaGrad, and Adam. In total, we have 12 learning rate scheduler $\times$ optimizer pairs for baseline experiments. We report detailed hyperparameter choices for each baseline in the appendix.

\textbf{evaluation metrics}:
We evaluate our method and baselines using three performance metrics:

\begin{itemize}
  \item \textit{post-shift local accuracy}: the average evaluation accuracy during a specified window starting at the beginning of the data distribution shift. We consider three window sizes: 100, 500, and 1000 iterations. This metric is used to measure the robustness of algorithms immediately after the data distribution change.
  \item \textit{pre-shift accuracy}: the maximum evaluation accuracy prior to the data distribution shift.
  \item \textit{post-shift accuracy}: the maximum evaluation accuracy after the data distribution shift.
\end{itemize}

\textbf{implementation}:
We follow Algorithm \ref{alg1} for SAMUEL implementation under the online setup. Our SAMUEL framework admits any choice of black-box OCO algorithms; for our online experiment we use Adagrad. Each expert is an Adagrad optimizer with a specific external learning rate multiplier. The total number of training iterations is 25,600 and we specify the smallest geometric interval to have length of 200 iterations. In total, the geometric intervals specified in Algorithm \ref{alg1} have 8 different lengths, and therefore at each training iteration, experts are running on 8 different geometric intervals. Furthermore, we provide five learning rate candidates [0.05, 0.1, 0.25, 0.5, 1] to SAMUEL. In total 40 experts run at each training iteration. All experiments were carried out on TPU-V2 hardware with training batch size of 512.

\begin{table}[t]
\scalebox{0.85}{
\begin{tabular}{ccccccc}
\hline
& \multicolumn{3}{c}{constant lr} 
& \multicolumn{3}{c}{cosine annealing}\\ 
\cline{2-7} & SGD & \multicolumn{1}{c}{AdaGrad} & \multicolumn{1}{c}{Adam} & SGD & AdaGrad & Adam \\
\hline\hline
avg acc. (window100) & 62.44$\pm$0.93& 63.02$\pm$1.84& 69.39$\pm$0.41 & 71.51$\pm$1.77& 76.71$\pm$0.24& 72.35$\pm$1.54\\
avg acc. (window500) & 73.57$\pm$0.47& 77.02$\pm$0.98& 84.41$\pm$0.19 & 82.14$\pm$0.45& 84.13$\pm$0.41& 85.87$\pm$0.32\\
avg acc. (window1000) & 81.33$\pm$0.25& 81.34$\pm$0.77& 87.55$\pm$0.14& 85.05$\pm$0.32& 86.95$\pm$0.33& 88.72$\pm$0.16\\
pre-shift acc.& 96.29$\pm$0.04& 96.26$\pm$0.12& 96.87$\pm$0.05& 97.06$\pm$0.05& 97.41$\pm$0.00 & 97.35$\pm$0.12 \\ 
post-shift acc. & 93.87$\pm$0.23& 93.49$\pm$0.17&
94.27$\pm$0.02& 92.80$\pm$0.45& 94.02$\pm$0.15 &94.32$\pm$0.16\\
\hline
\multirow{2}{*}{\textbf{SAMUEL (ours)}} &
\multicolumn{3}{c}{warmup cosine annealing}& \multicolumn{3}{c}{exponential decay}\\

\cline{2-7}& SGD & AdaGrad & Adam & SGD & AdaGrad & Adam\\ 
\hline
\hline
\textbf{79.73$\pm$0.98}& 71.48$\pm$0.64 &74.17$\pm$1.87& 67.13$\pm$1.48& 69.64$\pm$0.77& 74.68$\pm$0.57&69.71$\pm$0.83\\
\textbf{87.31$\pm$0.16}& 83.27$\pm$0.40 &84.23$\pm$0.36& 83.00$\pm$0.43& 78.83$\pm$0.58& 82.42$\pm$0.16&82.14$\pm$0.36\\
\textbf{89.21$\pm$0.05}& 86.12$\pm$0.21 &86.81$\pm$0.15& 86.49$\pm$0.22& 81.96$\pm$0.44& 85.06$\pm$0.12&85.66$\pm$0.27\\
 \textbf{97.47$\pm$0.13}& 97.26$\pm$0.10 &97.06$\pm$0.14& 96.88$\pm$0.09& 96.88$\pm$0.03& 97.22$\pm$0.14& 97.27$\pm$0.02\\
 \textbf{94.79$\pm$0.23}& 93.27$\pm$0.07& 93.25$\pm$0.12& 93.13$\pm$0.43& 90.52$\pm$0.22& 91.44$\pm$0.27& 92.77$\pm$0.32\\
 \hline
\end{tabular}}
\caption{\label{tab:onlinetable} Five accuracy metrics ($\%$) for SAMUEL and baseline methods under online data distribution shift setup. Standard deviation is computed using three runs with different random seeds.}
\end{table}

\begin{figure}[t]
\centering
\includegraphics[width=\textwidth]{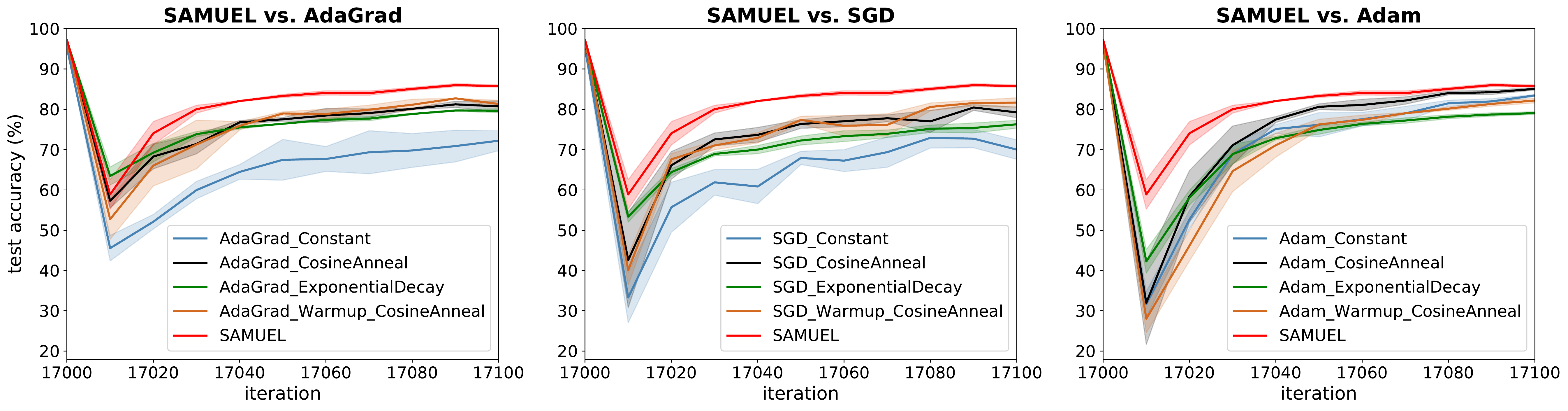}
\caption{Behavior comparison following data distribution shift. Each subplot compares SAMUEL with an optimizer paired with different learning rate schedulers. We focus on a window of size 100 iterations post data distribution shift. SAMUEL systematically recovers fastest from data change and has a leading test accuracy throughout the window. The confidence band for each trace is the standard deviation computed across three different random seeds.}
\label{fig:online}
\end{figure}

%

%

\textbf{results}: We report the quantitative scores under five evaluation metrics of our algorithm and baselines in Table \ref{tab:onlinetable}. We find that SAMUEL surpasses all baselines for every performance metric we considered. Although a number of baselines, such as Adagrad with cosine annealing, Adam with cosine annealing, and SGD with warmup cosine annealing, have comparable pre-shift test accuracy to SAMUEL, SAMUEL's ability to adaptively select the learning rate multiplier confers robustness to unforeseen changes in data distribution. This is unsurprising, given that typical off-the-shelf learning rate schedulers give a deterministic learning rate multiplier function across training and are therefore prone to suffering from data distribution changes. We also compare the qualitative behaviors of our algorithm and baselines within a 100-iteration window after the data distribution change in Figure \ref{fig:online}. It is clear from the plots that SAMUEL recovers faster than baselines. Furthermore, SAMUEL consistently maintains a higher test accuracy throughout the window.

\subsection{Offline Experiments}
\textbf{experiment setup}: We experiment with popular vision and language tasks to demonstrate SAMUEL's ability in selecting optimal learning rates on-the-fly without hyperparameter tuning. The tasks conducted are image classification on CIFAR-10 and ImageNet, and sentiment classification on SST-2. We use ResNet-18 for CIFAR-10, ResNet-50 for ImageNet, and LSTM for SST-2. 

\textbf{baseline}: We use the step learning rate scheduler as baseline, which is a commonly used off-the-shelf scheduler. We specifically use a three-phase schedule where we fix the two step transition points based on heuristics and provide five candidate learning rates to each phase. An exhaustive search thus yields a total of 125 different schedules. 

\textbf{implementation}: We adjusted Algorithm \ref{alg1} to be computationally efficient. Instead of running experts for each of the $\log T$ geometric intervals, we take a fixed number of experts (five total experts for these experiments, with one candidate learning rate per expert) with exponential decay factor on the history. Unlike Algorithm \ref{alg1} where experts are initialized at the start of each geometric interval, we initialize experts at the step transition points. We introduce a parameter $\alpha$ that determines the effective memory length: $x_{t+1}=x_t-\frac{\eta}{\sqrt{\epsilon I+\sum_{\tau=1}^t \alpha^{t-\tau} \nabla_{\tau} \nabla_{\tau}^{\top}}}\nabla_t$. A fixed interval with different $\alpha$s can be seen as a ``soft'' version of the geometric intervals in Algorithm \ref{alg1}. All experiments were conducted on TPU-V2 hardware. We provide pseudo-code for the implementation in the appendix.

\begin{figure}[h]
\centering
\includegraphics[width=\textwidth]{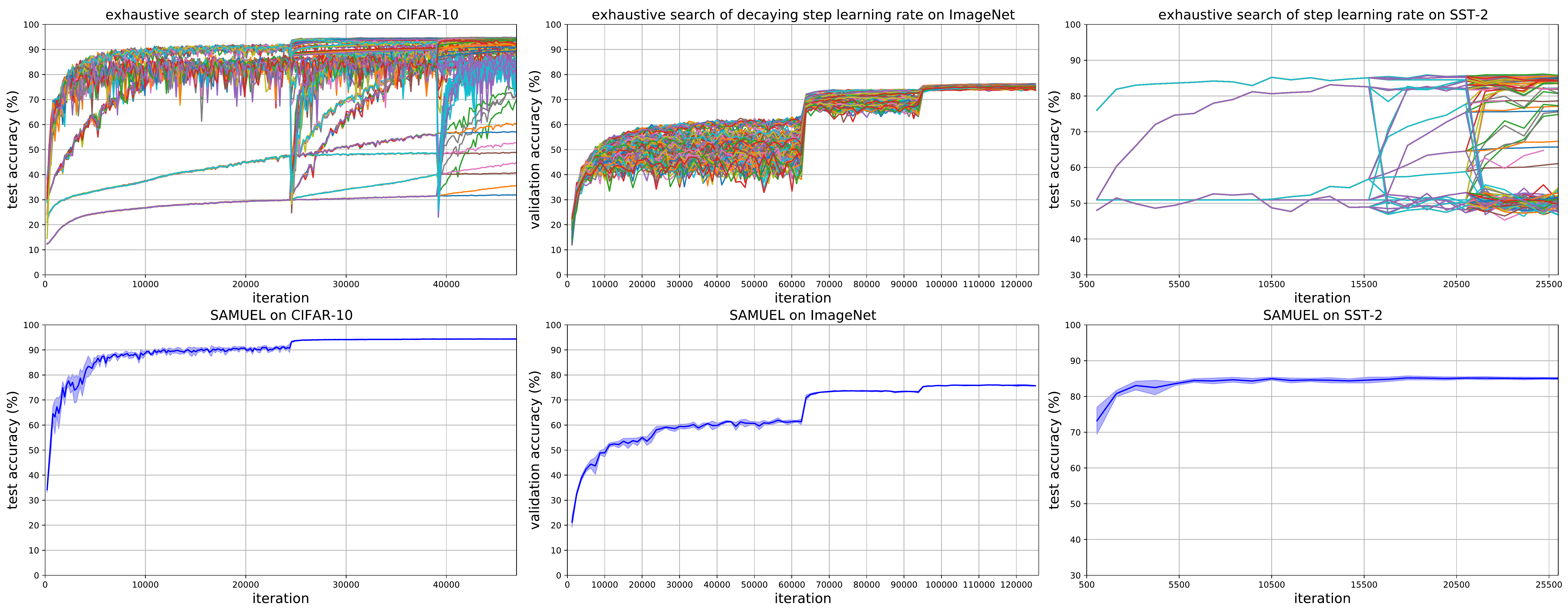}
\caption{Comparison of exhaustive searched step learning rate schedule (top) and \algname (bottom) on CIFAR-10, ImageNet and SST-2.}
\label{fig:merge}
\end{figure}

\noindent\textbf{CIFAR-10}: We compare a ResNet-18 model trained with \algname to ResNet-18 trained with Adagrad using brute-force searched step learning rate schedules. We process and augment the data following \cite{he2016deep}.  For training, we use a batch size of 256 and 250 total epochs. We fix the learning rate transition point at epoch 125 and 200, and provide five candidate learning rates \{0.0001, 0.001, 0.01, 0.1, 1\} for each region. Thus an exhaustive search yields 125 different schedules for the baseline. For a fair comparison, we adopt the same learning rate changing points for our method. 
We compare the test accuracy curves of the baselines and our methods in Fig.\ref{fig:merge}. The left plot in Fig.\ref{fig:merge} displays 125 runs using Adagrad for each learning rate schedule, where the highest accuracy is 94.95\%.  A single run of \algname achieves 94.76\% with the same random seed (average 94.50\% across 10 random seeds), which ranks in the top 3 of 125 exhaustively searched schedules.

\noindent\textbf{ImageNet}: 
We continue examining the performance of \algname on the large-scale ImageNet dataset. We trained ResNet-50 with exhaustive search of learning rate schedules and compare with SAMUEL. We also consider a more practical step learning rate scheduling scheme where the learning rate decays after each stepping point. Specifically, the candidate learning rates are \{0.2, 0.4, 0.6, 0.8, 1.0\} in the first phase, and decay by 10$\times$ when stepping into the next phase. We set the stepping position at epoch 50 and 75 in a total of 100 training epochs. We adopted the training pipeline from \cite{flax}. For both baselines and SAMUEL, we used the SGD optimizer with nesterov momentum of 0.9 and training batch size of 1024. 
The second column of Fig.\ref{fig:merge} displays the comparison of the exhaustive search baseline (top) to \algname (bottom). The best validation accuracy out of exhaustively searched learning rate schedules is 76.32\%. \algname achieves 76.22\% in a single run (average 76.15\% across 5 random seeds). Note that 76.22\% is near-SOTA given the model architecture.



\textbf{SST-2}:
We conduct experiments on the Stanford Sentiment Treebank (SST-2) dataset. We adopt the pipeline from \citep{flax} for pre-processing the SST-2 dataset and train a simple bi-directional LSTM text classifier. We set the learning rate step transitions at epoch 15 and 20 in a total 25 training epochs. For both baseline and our algorithm, we use SGD with momentum of 0.9 and additive weight decay of 3e-6 with training batch size of 64. The learning rate schedule setting is the same as that of CIFAR-10.
The right column of Fig. \ref{fig:merge} shows that the best accuracy of exhaustive search is 86.12\%, and the accuracy of \algname using the same seed is 85.55\% (average 85.58\% among 10 different random seeds). 

\begin{figure}[t]
    \centering
    \includegraphics[width=\columnwidth]{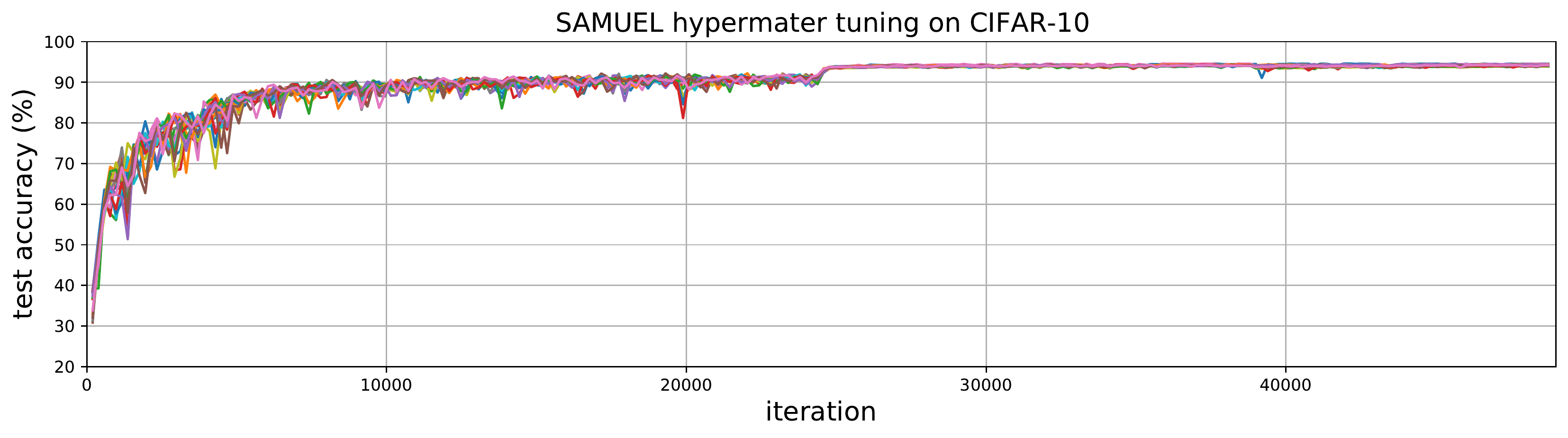}
    \caption{stability study of \algname with different hyperparameters.}
    \label{fig:stability}
\end{figure}

\textbf{stability of \algname}:
We demonstrate the stability of SAMUEL to hyperparameter tuning. Since our algorithm will automatically select the optimal learning rate, the only tunable hyperparameters are the number of multiplicative weight factor $\eta$ and the quantity of history decaying factors, $\alpha$. We conduct 18 trials with different hyperparameter combinations and display the test accuracy curves in Fig.\ref{fig:stability}. Specifically, we consider the quantity of decaying factors  $\alpha$ with values $\{2, 3, 6\}$ and $\{5, 10, 15, 20, 25, 30\}$ number of $\eta$ . As Fig.\ref{fig:stability} shows, all trials in \algname converge to nearly the same final accuracy regardless of the exact hyperparameters. 

\textbf{computation considerations}: A table of runtime comparison is provided in the appendix. As described in the implementation section, SAMUEL here has five experts in total, which incurs five times more compute than one single run of the baseline. Nevertheless, this is a dramatic improvement over brute-force hyperparameter sweeping of learning rate schedulers. For the step learning rate scheduler we experimented with, SAMUEL is 25 times more computationally efficient than tuning the scheduler with grid search. In addition, experts can be fully parallelized across different acceleration devices. It is expected that the run time of SAMUEL would approach that of a single run of the baseline with efficient implementation. 
\section{Conclusion}

In this paper we study adaptive gradient methods with local guarantees. The methodology is based on adaptive online learning, in which we contribute a novel twist on the multiplicative weight method that we show has better adaptive regret guarantees than state of the art. This, combined with known results in adaptive gradient methods, gives an algorithm \algname with optimal full-matrix local adaptive regret guarantees.
We demonstrate the effectiveness and robustness of \algname in experiments, where we show that \algname can automatically adapt to the optimal learning rate and achieve better task accuracy in online tasks with distribution shifts. For offline tasks, \algname consistently achieves comparable accuracy to an optimizer with fine-tuned learning rate schedule, using fewer overall computational resources in hyperparameter tuning.


\bibliography{iclr2023_conference}
\bibliographystyle{iclr2023_conference}

\newpage
\appendix
\section{Appendix}
\subsection{Proof of Theorem \ref{thm1}}

\begin{proof}
We define the pseudo weight $\tilde{w}_{\tau}(I,q)=w_{\tau}(I,q)/\eta_{I,q}$ for $\tau\le t$, and for $\tau>t$ we just set $\tilde{w}_{\tau}(I,q)=\tilde{w}_t(I,q)$. Let $\tilde{W}_{\tau}=\sum_{I\in S(\tau),q} \tilde{w}_{\tau}(I,q)$, we are going to show the following inequality
\begin{equation}\label{eq3}
    \tilde{W}_{\tau}\le \tau (\log(\tau)+1)\log(dTD^2G^2)\log(T)
\end{equation}
We prove this by induction. For $\tau=1$ it follows since on any interval $[1,t]$ the number of experts is exactly the number of possible $q$s, and the number of intervals $[1,t]\subset S$ is $O(\log(T))$. Now we assume it holds for all $\tau'\le \tau$. We have
\begin{align*}
    \tilde{W}_{\tau+1}&=\sum_{I\in S(\tau+1),q} \tilde{w}_{\tau+1}(I,q)\\
    &=\sum_{I=[\tau+1,t]\in S(\tau+1),q} \tilde{w}_{\tau+1}(I,q)+\sum_{I=[s,t], s\le \tau\in S(\tau+1),q} \tilde{w}_{\tau+1}(I,q)\\
    &\le \log(\tau+1)\log(dTD^2G^2)\log(T)+1+\sum_{I=[s,t], s\le \tau\in S(\tau+1),q} \tilde{w}_{\tau+1}(I,q)\\
    &= \log(\tau+1)\log(dTD^2G^2)\log(T)+1+\sum_{I=[s,t], s\le \tau\in S(\tau+1),q} \tilde{w}_{\tau}(I,q)(1+\eta_{I,q}  r_{\tau}(I))\\
    &\le \log(\tau+1)\log(dTD^2G^2)\log(T)+1+\tilde{W}_{\tau}+\sum_{I\in S(\tau),q} w_{\tau}(I,q) r_{\tau}(I)\\
    &\le (\tau+1)(\log(\tau+1)+1)\log(dTD^2G^2)\log(T)+\sum_{I\in S_{\tau},q} w_{\tau}(I,q) r_{\tau}(I)
\end{align*}
We further show that $\sum_{I\in S(\tau),q} w_{\tau}(I,q) r_{\tau}(I)\le  0$:
\begin{align*}
    \sum_{I\in S(\tau),q} w_{\tau}(I,q) r_{\tau}(I)&=W_{\tau} \sum_{I\in S(\tau),q} p_{\tau}(I,q) (\ell_{\tau}(x_{\tau})-\ell_{\tau}(x_{\tau}(I,q)))\\
    &\le W_{\tau} \sum_{I\in S(\tau),q} p_{\tau}(I,q) (\sum_{J\in S(\tau),q} w_{\tau}(J,q)\ell_{\tau}(x_{\tau}(J,q))/W_{\tau}-\ell_{\tau}(x_{\tau}(I,q)))\\
    &=0
\end{align*}
which finishes the proof of induction.

Based on this, we proceed to prove that for any $I=[s,t]\in S$, 
$$
    \sum_{\tau=s}^t r_{\tau}(I)= O\left(\sqrt{\log(T)} \max\left\{D G\sqrt{\log(T)}, \sqrt{  \sum_{\tau=s}^t (\nabla_{\tau}^{\top} (x_{\tau}-x_{\tau}(I)))^2}\right\}\right)
$$
By inequality \ref{eq3}, we have that
$$
    \tilde{w}_{t+1}(I,q)\le \tilde{W}_{t+1}\le (t+1)(\log(t+1)+1)\log(dTD^2G^2)\log(T)
$$
Taking the logarithm of both sides, we have
$$
    \log(\tilde{w}_{t+1}(I,q))\le \log(t+1)+\log(\log(t+1)+1)+\log (\log(dTD^2G^2))+\log(\log(T))
$$
Recall the expression
$$
    \tilde{w}_{t+1}(I,q)=\prod_{\tau=s}^t (1+\eta_{I,q} r_{\tau}(I))
$$
By using the fact that $\log(1+x)\ge x-x^2, \forall x\ge -1/2$ and 
$$
   |\eta_{I,q} r_{\tau}(I)|\le \frac{1}{4GD} \|x_{\tau}-x_{\tau}(I,q)\|_2 G\le 1/2 
$$
we obtain for any $q$
$$
    \log(\tilde{w}_{t+1}(I,q)) \ge \sum_{\tau=s}^t \eta_{I,q} r_{\tau}(I)-\sum_{\tau=s}^t \eta_{I,q}^2 r_{\tau}(I)^2
$$
Now we upper bound the term $\sum_{\tau=s}^t r_{\tau}(I)^2$. By convexity we have that $r_{\tau}(I)=\ell_{\tau}(x_{\tau})-\ell_{\tau}(x_{\tau}(I))\le \nabla_{\tau}^{\top}(x_{\tau}-x_{\tau}(I))$, hence
$$
    \sum_{\tau=s}^t r_{\tau}(I)\le \frac{4\log(T)}{\eta_{I,q}}+4\eta_{I,q} \sum_{\tau=s}^t (\nabla_{\tau}^{\top}(x_{\tau}-x_{\tau}(I)))^2
$$
The next step is to upper bound the term $ \nabla_{\tau}^{\top}(x_{\tau}-x_{\tau}(I))$. By Hölder's inequality we have that $\nabla_{\tau}^{\top}(x_{\tau}-x_{\tau}(I))\le \|\nabla_{\tau}\|_{H^{-1}} \|x_{\tau}-x_{\tau}(I)\|_H$ for any $H$. As a result, we have that for any $H$ which is PSD and $tr(H)\le d$,
$$
    (\nabla_{\tau}^{\top}(x_{\tau}-x_{\tau}(I)))^2\le \nabla_{\tau}^{\top} H^{-1} \nabla_{\tau} \|x_{\tau}-x_{\tau}(I)\|_H^2\le  \nabla_{\tau}^{\top} H^{-1} \nabla_{\tau} 4D^2 d
$$
where $ \|x_{\tau}-x_{\tau}(I)\|_H^2\le 4D^2 d$ is by elementary algebra: let $H=V^{-1} M V$ be its diagonal decomposition where $B$ is a standard orthogonal matrix and $M$ is diagonal. Then 
\begin{align*}
    \|x_{\tau}-x_{\tau}(I)\|_H^2&= (x_{\tau}-x_{\tau}(I))^{\top} H (x_{\tau}-x_{\tau}(I))\\
    &= (V(x_{\tau}-x_{\tau}(I)))^{\top} M V(x_{\tau}-x_{\tau}(I))\\
    &\le (V(x_{\tau}-x_{\tau}(I)))^{\top} d I V(x_{\tau}-x_{\tau}(I))\\
    &\le 4D^2 d
\end{align*}
Hence
$$
    \sum_{\tau=s}^t r_{\tau}(I)\le \frac{4\log(T)}{\eta_{I,q}}+4\eta_{I,q} D^2 d \min_{H} \sum_{\tau=s}^t \nabla_{\tau}^{\top} H^{-1}  \nabla_{\tau}
$$
The optimal choice of $\eta$ is of course
$$
    4\sqrt{\frac{\log(T)}{ D^2 d \min_{H} \sum_{\tau=s}^t \nabla_{\tau}^{\top} H^{-1}  \nabla_{\tau}}}
$$

When $D^2 d \min_{H} \sum_{\tau=s}^t \nabla_{\tau}^{\top} H^{-1}  \nabla_{\tau}\le 64 G^2D^2 \log(T)$, $\eta_{I,1}$ gives the bound $O(GD\log(T))$. When $D^2 d \min_{H} \sum_{\tau=s}^t \nabla_{\tau}^{\top} H^{-1}  \nabla_{\tau}> 64 G^2D^2 \log(T)$, there always exists $q$ such that $0.5\eta_{I,q}\le \eta\le 2\eta_{I,q}$ by the construction of $q$ so that the regret $R_1(I)$ is upper bounded by
\begin{equation}\label{eqapp}
O\left(D\sqrt{\log(T)}\max\left\{G\sqrt{\log(T)}, d^{\frac{1}{2}}\sqrt{ \min_{H\in \H}  \sum_{\tau=s}^t \nabla_{\tau}^{\top} H^{-1}  \nabla_{\tau}}\right\}\right)
\end{equation}

Now we have proven an optimal regret for any interval $I\in S$, it's left to extend the regret bound to any interval $J$. We show that by using Cauchy-Schwarz, we can achieve the goal at the cost of an additional $\sqrt{\log(T)}$ term. We need the following lemma from \cite{daniely2015strongly}:
\begin{lemma}[Lemma 5 in \cite{daniely2015strongly}]\label{lem1}
For any interval $J$, there exists a set of intervals $S^J$ such that $S^J$ contains only disjoint intervals in $S$ whose union is exactly $J$, and $|S_J|= O(\log(T))$
\end{lemma}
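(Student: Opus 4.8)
The plan is to recognize that the geometric covering family $S$ is precisely the set of nodes of a balanced binary segment tree over the leaves $\{1,\dots,2^k\}$. Indeed, $S_i$ is the collection of consecutive dyadic blocks of length $2^i$, and two adjacent blocks forming an odd--even pair in $S_i$ merge into a single block of $S_{i+1}$, so the parent--child relation of a complete binary tree of height $k$ is built directly into the definition of $S$. Under this identification the lemma becomes the classical fact that any query interval is the disjoint union of $O(\log T)$ canonical segment-tree nodes, so I would prove exactly that.

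First I would construct $S^J$ explicitly. Starting from the root $[1,2^k]$, descend toward $J=[s,t]$: as long as $J$ is contained in a single child, move into that child without recording anything. Let $[a,b]$, with midpoint $m$, be the first visited node whose two children each meet $J$, i.e.\ $a\le s\le m<t\le b$. Split $J=[s,m]\cup[m+1,t]$, where $[s,m]$ is a \emph{suffix} of the left child $[a,m]$ and $[m+1,t]$ is a \emph{prefix} of the right child $[m+1,b]$. Covering each piece by canonical nodes and taking the union defines $S^J$; disjointness and the exact identity $\bigcup S^J = J$ are immediate, since the two pieces partition $J$ and each is tiled within a distinct subtree.

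The size bound then reduces to a one-sided covering claim: a prefix (or, symmetrically, a suffix) of a subtree of height $h$ can be tiled by at most $h$ canonical nodes. This I would prove by induction on $h$: splitting the height-$h$ node at its midpoint, a prefix either stays inside the left child (recurse, $\le h-1$ nodes) or equals the whole left child together with a prefix of the right child ($1+(h-1)=h$ nodes). Applying this claim to the suffix $[s,m]$ and the prefix $[m+1,t]$, each living in a subtree of height at most $k-1$, gives $|S^J|\le 2(k-1)=O(\log T)$ since the tree height is $k=\log_2 T$.

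The main obstacle is the one-sided induction together with checking that the straddle split is exact; the per-level intuition (at most one selected node near each of the two boundaries $s$ and $t$ at every level) is tempting but error-prone, so I prefer the clean ``descend to the first straddle, then cover two one-sided pieces'' argument. A secondary point is the regime $2^k<T<2^{k+1}$, where the rightmost block of each $S_i$ is truncated; since truncation only shortens boundary blocks without altering the parent/sibling structure or the number of levels ($k+1=O(\log T)$), the same argument applies verbatim.
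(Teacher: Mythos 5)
Your proof is correct in substance, but note that the paper contains no proof of this statement to compare against: the lemma is simply quoted as Lemma~5 of \cite{daniely2015strongly}, so your segment-tree argument is a genuine, self-contained addition rather than a retracing. The comparison is still informative. Daniely et al.\ prove their lemma for a slightly different covering, $\mathcal{I}_j=\{[i\cdot 2^j,(i+1)\cdot 2^j-1]: i\in\mathbb{N}\}$, whose level-$j$ intervals only begin at time $2^j$ (they are not an aligned partition of $[1,2^k]$), and their argument decomposes $J$ into two sequences of intervals whose lengths grow and then shrink geometrically, which yields the slightly sharper count $O(\log|J|)$. Your proof instead works directly with the paper's redefined family $S$ (the aligned dyadic blocks, i.e.\ the nodes of a complete segment tree) and invokes the classical canonical-decomposition argument; this buys exact consistency with the paper's actual definition of $S$ --- which the cited lemma does not literally cover, since the two families differ --- at the cost of the marginally weaker bound $O(\log T)$, which is all the paper uses. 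One small repair is needed in your one-sided covering claim: as stated, its base case is false, since a nonempty prefix of a height-$0$ subtree (a leaf) requires one node, not zero, and correspondingly the second branch of your induction must allow the prefix of the right child to be that entire child. The clean fix is to prove the bound ``at most $h+1$ nodes for any prefix'' (or ``at most $h$ for a proper nonempty prefix, with the full subtree costing one node''), which changes your final count from $2(k-1)$ to at most $2k$ and leaves the $O(\log T)$ conclusion, and hence the lemma, intact; you should also state explicitly that if the descent reaches a leaf then $S^J$ is that single leaf.
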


We now use Cauchy-Schwarz to bound the regret:
\begin{lemma}\label{lem2n}
For any interval $J$ which can be written as the union of $n$ disjoint intervals $\cup_i I_i$, its regret $Regret(J)$ can be upper bounded by:
$$
    Regret(J)\le \sqrt{n \sum_{i=1}^n Regret(I_i)^2}
$$
\end{lemma}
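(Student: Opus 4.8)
The plan is to reduce the lemma to a one-line subadditivity statement about regret and then finish with a single application of Cauchy--Schwarz. Write $J=\bigcup_{i=1}^n I_i$ with the $I_i$ pairwise disjoint, and let $x_\tau$ be the points played by the algorithm. The first step is immediate: since the $I_i$ partition $J$, the cumulative loss of the algorithm splits exactly as $\sum_{\tau\in J}\ell_\tau(x_\tau)=\sum_{i=1}^n\sum_{\tau\in I_i}\ell_\tau(x_\tau)$.

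The one mildly nontrivial step concerns the comparator. Let $x_J^\star$ minimize the aggregate loss over $J$ and let $x_i^\star$ minimize it over $I_i$. Because $x_i^\star$ is optimal on $I_i$ we have $\sum_{\tau\in I_i}\ell_\tau(x_J^\star)\ge\sum_{\tau\in I_i}\ell_\tau(x_i^\star)$ for each $i$, and summing over $i$ gives the superadditivity bound
$$\min_x\sum_{\tau\in J}\ell_\tau(x)=\sum_{i=1}^n\sum_{\tau\in I_i}\ell_\tau(x_J^\star)\ge\sum_{i=1}^n\min_x\sum_{\tau\in I_i}\ell_\tau(x).$$
Subtracting this inequality from the exact loss decomposition of the first step yields $\mathrm{Regret}(J)\le\sum_{i=1}^n\mathrm{Regret}(I_i)$: splitting the horizon only gives the comparators more freedom, so the single global comparator is no stronger than the collection of local ones. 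I would then apply Cauchy--Schwarz to the inner product of the all-ones vector with $(\mathrm{Regret}(I_1),\dots,\mathrm{Regret}(I_n))$, obtaining $\sum_{i=1}^n \mathrm{Regret}(I_i)\le\sqrt{n}\,\sqrt{\sum_{i=1}^n\mathrm{Regret}(I_i)^2}$, which is exactly the claimed bound.

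There is no genuinely hard obstacle here; the lemma is a routine bookkeeping step, and the substance is entirely in the middle inequality. The only point requiring care is the \emph{direction} of that comparator inequality (the global minimum is no smaller than the sum of the local minima). It is worth noting why the bound is cast in the form $\sqrt{n\sum_i \mathrm{Regret}(I_i)^2}$ rather than the superficially weaker-looking $\sum_i\mathrm{Regret}(I_i)$: the advantage is downstream, since squaring the per-interval bounds of \eqref{eqapp} turns each $\mathrm{Regret}(I_i)^2$ into a quantity linear in $\sum_{\tau\in I_i}\nabla_\tau^\top H^{-1}\nabla_\tau$, and these recombine — using a single common $H$ to dominate the sum of per-interval minima — into one square root over all of $J$, at the cost only of the factor $\sqrt{n}=O(\sqrt{\log T})$ supplied by the covering of Lemma~\ref{lem1}.
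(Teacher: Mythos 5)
Your proposal is correct and takes essentially the same route as the paper: subadditivity of regret over the disjoint pieces, $\mathrm{Regret}(J)\le\sum_{i=1}^n \mathrm{Regret}(I_i)$, followed by a single application of Cauchy--Schwarz. The only difference is that the paper asserts the subadditivity step without justification, whereas you spell out the (correct) comparator argument behind it.
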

\begin{proof}
The regret over $J$ can be controlled by$ Regret(J)\le \sum_{i=1}^n Regret(I_i)$. By Cauchy-Schwarz we have that
$$
    (\sum_{i=1}^n Regret(I_i))^2\le n \sum_{i=1}^n Regret^2(I_i)
$$
which concludes our proof.
\end{proof}

We can now upper bound the regret $R_1(J)$ using Lemma \ref{lem2n}, replacing $Regret$ by $R_1$ and $n$ by $|S_J|=O(\log(T))$. For any interval $J$, its regret $R_1(J)$ can be upper bounded by:
$$
    R_1(J)\le \sqrt{|S_J| \sum_{I\in S_J} R_1(I)^2}
$$

Combining the above inequality with the upper bound on $R_1(I)$ \ref{eqapp}, we reach the desired conclusion.
\end{proof}

\subsection{Proof of Corollary \ref{cor1}}

\begin{proof}
Using Theorem \ref{thm1} we have that $R_1(I)$ is upper bounded by
$$
R_1(I)=O\left(D \log(T)\max\left\{G\sqrt{\log(T)}, d^{\frac{1}{2}}\sqrt{ \min_{H \in \H}  \sum_{\tau=s}^t \|\nabla_{\tau}\|_H^{*2}}\right\}\right)
$$
Because on each interval $J\in S$, one of the Adagrad experts achieve the bound
$$
R_0(J) = O\left(D  d^{\frac{1}{2}}\sqrt{ \min_{H \in \H}  \sum_{\tau=s}^t \|\nabla_{\tau}\|_H^{*2}}\right)
$$
For any interval $I$, using the result from \cite{daniely2015strongly} (Lemma \ref{lem1}) and Lemma \ref{lem2n} by replacing $Regret$ by $R_0$, it follows  
$$
R_0(I) = O\left(D\sqrt{\log(T)}  d^{\frac{1}{2}}\sqrt{ \min_{H \in \H}  \sum_{\tau=s}^t \|\nabla_{\tau}\|_H^{*2}}\right)
$$
Combining both bounds give the desired bound on $Regret(I)$.
\end{proof}

\subsection{Proof of Corollary \ref{cor3}}

\begin{proof}
The proof is almost identical to that of the previous corollary, observing that the regret $R_0(I)$ is $\tilde{O}(D_{\infty} \sum_{i=1}^d \|\nabla_{s:t,i}\|_2  )$ due to \cite{duchi2011adaptive}, and the regret $R_1(I)$ remains $\tilde{O}(D \sqrt{ \min_{H\in \H}  \sum_{\tau=s}^t \nabla_{\tau}^{\top} H^{-1}  \nabla_{\tau}})$, which is upper bounded by $\tilde{O}(D_{\infty} \sum_{i=1}^d \|\nabla_{s:t,i}\|_2  )$.
\end{proof}

\subsection{Baseline Hyperparameters for Online Experiments}
Here we report the hyperparmeters used in the baseline learning rate schedulers in the online experiments. We use the off-the-shelf learning rate schedulers from the optax library. Please refer to the optax documentation for the specific meaning of the parameters.

AdaGrad

\begin{itemize}
    \item constant learning rate: learning rate 0.2.
    \item cosine annealing: init value = 0.2, decay steps = 25600, alpha = 0.
    \item warmup with cosine annealing: init value = 1e-5, peak value = 0.15, warmup steps = 1000, end value = 0.
    \item exponential decay: init value = 0.35, transition steps= 3000, decay rate = 0.5. 
\end{itemize}

SGD

\begin{itemize}
    \item constant learning rate: learning rate 0.15.
    \item cosine annealing: init value = 0.3, decay steps = 25600, alpha = 0.
    \item warmup with cosine annealing: init value = 1e-5, peak value = 0.5, warmup steps = 1000, end value = 0.
    \item exponential decay: init value = 0.6, transition steps= 3000, decay rate = 0.5. 
\end{itemize}

Adam

\begin{itemize}
    \item constant learning rate: learning rate 0.001.
    \item cosine annealing: init value = 0.001, decay steps = 25600, alpha = 0.
    \item warmup with cosine annealing: init value = 1e-5, peak value = 0.005, warmup steps = 1000, end value = 0.
    \item exponential decay: init value = 0.005, transition steps= 3000, decay rate = 0.5. 
\end{itemize}

\subsection{Compute comparison for offline experiments}

We report the compute resource consumption of both baselines and SAMUEL from the offline experiments. We run experts sequentially and the running time of our algorithm is longer than the baselines. With more efficient implementation and parallelizing each expert across TPU devices, it is expected the running time of SAMUEL would approach the running time of the baseline algorithm.

\begin{table}[h]
\scalebox{0.85}{
\begin{tabular}{cccccc}
\hline
CIFAR-10 & device config & runtime (m)& grid-search cost (trials) &  runtime per expert (m) & total TPU hours\\ \hline\hline
\textbf{baseline} & 4TPU & 11 & 125 & 11 & 91.6\\ \hline
\textbf{SAMUEL}  & 4TPU& 66 & 1& 13.2 & 4.4\\ \hline
\hline
ImageNet & & & &\\ \hline\hline
\textbf{baseline} & 4TPU & 254 & 125 & 254 & 2116.6\\ \hline
\textbf{SAMUEL}  & 16TPU& 794 & 1& 158.8 & 211.7\\ \hline\hline
SST-2 & & & &\\ \hline\hline
\textbf{baseline} & 1TPU & 12 & 125 & 12 & 25\\ \hline
\textbf{SAMUEL}  & 4TPU& 25 & 1& 5 & 1.6\\ \hline\

\end{tabular}}\caption{compute comparison}
\end{table}

\subsection{Pseudocode for Offline Experiments}
\begin{algorithm}[h]
\caption{\algname experiment pseudocode}
\label{pseudo}
\begin{algorithmic}[1]
\STATE Input: AdaGrad optimizer $\mA$,  constant Q, a set of learning rates $\{1, 0.1, 0.001, 0.0001, 0.00001\}$, reinitialize frequency K.
\STATE Initialize: for each learning rate $i \in S$, a copy of $\mA_i$.
\STATE Set $\eta_{i, q}=\frac{1}{2^q}$ for $q\in [1,Q]$.
\STATE Initialize $w_1(i, q)=\min \{1/2, \eta_{I,q}\}$. Initialize NN params $x_0$
\FOR{$\tau = 1, \ldots, T$}
\STATE Let updated NN params $x_{\tau}(i,q) = \mA_i(\tau)$
\STATE Let $W_{\tau}=\sum_{i,q} w_{\tau}(i,q)$.
\STATE sample $x_{\tau}$ according to $w_{\tau}(i,q)/W_{\tau}$.
\STATE Receive batch loss $\ell_{\tau}(x_{\tau})$, define $r_{\tau}(i)=\ell_{\tau}(x_{\tau})-\ell_{\tau}(x_{\tau}(i,q))$.
\STATE For each $i$, update $w_{\tau+1}(i,q)$ as follows.
$$w_{\tau+1}(i,q)=
w_{\tau}(i,q)(1+\eta_{i,q} r_{\tau}(i))$$
\IF{$\tau \% K = 0$} 
    \STATE Re-initialize $w_\tau(i, q)=\min \{1/2, \eta_{I,q}\}$
    \STATE All copies $\mA_i$ start from NN params $x_\tau$
\ENDIF
\ENDFOR
\end{algorithmic}
\end{algorithm}

\end{document}